\begin{document}
\title{Stopping Criterion for the Mean Shift Iterative Algorithm}
\author{Yasel Garc\'es \and Esley Torres \and Osvaldo Pereira \and Claudia P\'erez \and \\ Roberto Rogr\'iguez}
\institute{Institute of Cybernetics, Mathematics and Physics, Havana, Cuba \quad \email{ygarces@icimaf.cu},
\email{esley@icimaf.cu},
\email{rrm@icimaf.cu}}
\maketitle
\begin{abstract}
Image segmentation is a critical step in computer vision tasks constituting an essential issue for pattern recognition and visual interpretation. In this paper, we propose a new stopping criterion for the mean shift iterative algorithm by using images defined  in $\mathbb{Z}_{n}$ ring, with the goal of reaching a better segmentation. We carried out also a study on the weak and strong of equivalence classes between two images. An analysis on  the convergence with this new stopping criterion is carried out too.
\end{abstract}
\section{Introduction}
Many techniques and algorithms have been proposed for digital image segmentation. Unfortunately, traditional segmentation techniques using low-level, such as thresholding, histograms or other conventional operations are rigid methods. Automation of these classical approximations is difficult due to the complexity in shape and variability within each individual object in the image. Mean Shift (MSH) is a robust technique which has been applied in many computer vision tasks. MSH as an iterative algorithm has been used in many works by using the entropy as a stopping criterion \cite{Rodriguez11,Rodriguez11a,Rodriguez12,Rodriguez08,Dominguez11}.\\   
Entropy is an essential function in information theory and has special uses for images data, e.g., restoring images, detecting contours, segmenting images and many other applications \cite{Zhang03,Suyash06}. However, in the field of images, the range of properties of this function could be increased if the images would be defined in $\mathbb{Z}_{n}$ rings. \\   
In this paper, we compare the stability of iterative MSH algorithm using a new stopping criterion based on ring theory with respect to the stopping criterion used in \cite{Rodriguez11,Rodriguez11a,Rodriguez12,Rodriguez08}. The remainder of the paper is organized as follows: Theoretical aspects related with the entropy and the defined images in $\mathbb{Z}_{n}$ ring are exposed in Section \ref{Theorical aspect: Entropy}. Here, a special attention is dedicated to the benefits of image entropy in the $\mathbb{Z}_{n}$ ring. Section \ref{Experiments and Results} shows the experimental results, comparisons and discussion. And finally the most important conclusions are given in the last section.

\section{Theoretical Aspects: Entropy}
\label{Theorical aspect: Entropy}
Entropy is a measure of unpredictability or information content. In the space of the digital images the entropy is defined as \cite{Shannon48}.
\begin{definition}[Image Entropy]
The entropy of the image $\mathcal{A}$ is defined by
\begin{equation}
\label{entropy definition}
E(\mathcal{A})=-\sum_{x=0}^{2^{B}-1}p_{x}log_{2}{p_{x}},
\end{equation}
where B is the total quantity of bits of the digitized image $\mathcal{A}$ and $p(x)$ is the probability of occurrence of a gray-level value. By agreement $\log_{2}(0)=0$.
\end{definition}
In recent works  \cite{Rodriguez11,Rodriguez11a,Rodriguez12,Rodriguez08} the entropy is an important point to define a stopping criterion for a segmentation algorithm based on an iterative computation of the mean shift filtering. In \cite{Rodriguez11,Rodriguez11a,Rodriguez12,Rodriguez08} the stopping criterion is
\begin{equation}
\label{old criterion}
\nu(\mathcal{A},\mathcal{B})=\vert E(\mathcal{A})-E(\mathcal{B}) \vert , 
\end{equation}
where $E(\cdot)$ is the function of entropy and the algorithm is stopped when $\nu(\mathcal{A}_{k},\mathcal{A}_{k-1})\leq \epsilon$. Here $\epsilon$ and $k$ are respectively the threshold to stop the iterations and the number of iterations.
\begin{definition}[Weak equivalent in Images]
\label{weakly equivalent}
Two images $\mathcal{A}$ and $\mathcal{B}$ are weakly equivalents if
$$
E(\mathcal{A})=E(\mathcal{B}).
$$
We denote the weak equivalent between $\mathcal{A}$ and $\mathcal{B}$ using $\mathcal{A}\asymp \mathcal{B}$.
\end{definition}
Trivial implication is: 
$$
\mathcal{A}\asymp \mathcal{B}\Longleftrightarrow \nu(\mathcal{A},\mathcal{B})=0.
$$
Note that using the Definition \ref{weakly equivalent} the stopping criterion  defined in (\ref{old criterion}) is a measure to know when two images are close to be weakly equivalents.\\\
Figure \ref{different images comparation} shows two different images of $64\times 64$. A reasonable stopping criterion should present a big difference between Figure \ref{one_a} and Figure \ref{two_b}. However, by using the expression (\ref{old criterion}), we obtain that $\nu(Figure\ \ref{one_a},\ Figure\ \ref{two_b})=0$.
\begin{figure}[H]
\centering
\subfigure[]{\includegraphics[scale=2]{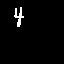}\label{one_a} }
\subfigure[]{\includegraphics[scale=2]{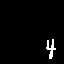}\label{two_b} }
\vspace{-0.1in}
\caption{Dissimilar Images}
\label{different images comparation}
\end{figure}
The defined stopping criterion in (\ref{old criterion}) never consider the spacial information between the images $\mathcal{A}$ and $\mathcal{B}$. For this reason, it is possible to have two very different images and to obtain a small value by using (\ref{old criterion}).\\
This is a strong reason to consider that the defined stopping criterion in (\ref{old criterion}) is not appropriate and provide instability in the iterative mean shift algorithm. For this reason, it is necessary to consider other stopping criterion that provides a better performance.\\
It is natural to think that two images are close if their subtraction is close to zero. The problem of this idea is that, in general, when the subtraction gives negative values many authors consider to truncate to zero these elements. This consideration, in general, it not describe the difference between two images, and in some cases, it is possible to lose important information.\\
For this reason, it is necessary to define a structure such that the operations between two images are intern.
\begin{definition}[$\mathbb{Z}_{n}$ Ring]
The $\mathbb{Z}_{n}$ ring is the partition of the set of integers $\mathbb{Z}$ in which the elements are related by the congruence module $n$.
\end{definition}
Mathematically speaking, we say that $a$ is in the class of $b$ ($a\in C_{b}$) if $a$ is related by $(\sim )$ with $b$, where
\begin{eqnarray*}
a\sim b &\Longleftrightarrow a \equiv b (mod\ n) \overset{def}{\Longleftrightarrow} (b-a)\in n\mathbb{Z},\quad \mbox{where}\\
n\mathbb{Z} &= \left\lbrace 0,n,2n,\ldots \right\rbrace \quad \mbox{and}\quad n\in \mathbb{Z}\quad \mbox{is fixed}. 
\end{eqnarray*}
Consequently $\mathbb{Z}_{n} = \lbrace C_{0},\ C_{1},\ldots ,\ C_{n-1}\rbrace$.\\
If we translate the structure of the $\mathbb{Z}_{n}$ ring to the set of images of size $k\times m$ where the pixel values are less that $n-1$ and we denote this set as $G_{k\times m}(\mathbb{Z}_{n})$, we obtain the next result.
\begin{theorem}
\label{theorem ring matrix}
The set $G_{k\times m}(\mathbb{Z}_{n})(+,\cdot)$, where $(+)$ and $(\cdot )$ are respectively the pixel-by-pixel sum and multiplication in $\mathbb{Z}_{n}$, has a ring structure.
\end{theorem}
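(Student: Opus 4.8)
The plan is to verify the ring axioms for $G_{k\times m}(\mathbb{Z}_{n})$ by reducing them, entry by entry, to the already-known ring axioms of $\mathbb{Z}_{n}$. The key observation is that the set $G_{k\times m}(\mathbb{Z}_{n})$ is nothing but the set of functions from the index set $I=\{1,\dots,k\}\times\{1,\dots,m\}$ into $\mathbb{Z}_{n}$, equipped with pointwise operations; equivalently, it is the direct product $\prod_{(i,j)\in I}\mathbb{Z}_{n}$, i.e.\ the $(km)$-fold product ring $(\mathbb{Z}_{n})^{km}$. Since a finite (indeed arbitrary) direct product of rings is a ring, the result follows. I would, however, spell this out concretely since the paper is aimed at an image-processing audience.

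First I would fix notation: for an image $\mathcal{A}\in G_{k\times m}(\mathbb{Z}_{n})$ write $\mathcal{A}_{ij}\in\mathbb{Z}_{n}$ for its $(i,j)$ pixel, and define $(\mathcal{A}+\mathcal{B})_{ij}=\mathcal{A}_{ij}+\mathcal{B}_{ij}$ and $(\mathcal{A}\cdot\mathcal{B})_{ij}=\mathcal{A}_{ij}\cdot\mathcal{B}_{ij}$, where the operations on the right are those of $\mathbb{Z}_{n}$. The first thing to check is \emph{closure}: because each $\mathcal{A}_{ij},\mathcal{B}_{ij}\in\mathbb{Z}_{n}$ and $\mathbb{Z}_{n}$ is closed under its sum and product, every entry of $\mathcal{A}+\mathcal{B}$ and $\mathcal{A}\cdot\mathcal{B}$ again lies in $\mathbb{Z}_{n}$, so both results are genuine elements of $G_{k\times m}(\mathbb{Z}_{n})$ — this is exactly the ``intern operations'' property motivated in the text. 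Next I would check that $(G_{k\times m}(\mathbb{Z}_{n}),+)$ is an abelian group: associativity and commutativity of $+$ hold entrywise because they hold in $\mathbb{Z}_{n}$; the neutral element is the all-zero image $\mathbf{0}$ with $\mathbf{0}_{ij}=C_{0}$; and the additive inverse of $\mathcal{A}$ is the image with entries $-\mathcal{A}_{ij}$ (the inverse in $\mathbb{Z}_{n}$). Then I would check that $(\cdot)$ is associative and distributes over $(+)$, again by noting that each of these identities, read coordinatewise, is precisely the corresponding identity in $\mathbb{Z}_{n}$, which holds since $\mathbb{Z}_{n}$ is a ring. (If one wants a ring with unity, the all-$C_{1}$ image serves as multiplicative identity; and since $\mathbb{Z}_{n}$ is commutative, so is $G_{k\times m}(\mathbb{Z}_{n})$.)

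There is no real obstacle here: the entire proof is the remark that pointwise operations inherit equational axioms from the base ring, so the ``hard part'' is merely organizational — being careful that the zero and the negatives are taken in $\mathbb{Z}_{n}$ (so that, e.g., subtraction of images never leaves the set, which is the whole point of passing to $\mathbb{Z}_{n}$ rather than truncating negative pixel values at $0$). One small point worth stating explicitly is the hypothesis ``pixel values are less than $n-1$'': this just guarantees that an ordinary digitized image, whose gray levels are integers in $\{0,\dots,2^{B}-1\}$, embeds into $G_{k\times m}(\mathbb{Z}_{n})$ via the canonical map $a\mapsto C_{a}$ when $n>2^{B}-1$, so the construction genuinely applies to the images of interest. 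I would close by remarking that this is the structure in which the new stopping criterion — based on the image difference $\mathcal{A}-\mathcal{B}$ computed in $G_{k\times m}(\mathbb{Z}_{n})$ — is well defined.
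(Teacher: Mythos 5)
Your proof is correct and follows essentially the same route as the paper's: both arguments observe that the operations are defined pixel-by-pixel, so $G_{k\times m}(\mathbb{Z}_{n})$ inherits the ring axioms coordinatewise from $\mathbb{Z}_{n}$ (your identification with the product ring $(\mathbb{Z}_{n})^{km}$ is just a cleaner way of saying this). Your version is considerably more explicit than the paper's two-line proof, but there is no difference in substance.
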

\begin{proof}
As the pixels of the image are in $\mathbb{Z}_{n}$, they satisfies the ring axioms. The operation between two images was defined pixel by pixel, then is trivial that $G_{k\times m}(\mathbb{Z}_{n})$ under the operations $(+,\cdot)$ of the $\mathbb{Z}_{n}$ ring inherits the ring structure.\qed
\end{proof}
In this moment, we have an important structure where we can operate with the images. In the ring $G_{k\times m}(\mathbb{Z}_{n})(+,\cdot)$ the sum, subtraction or multiplication of two images always is an image.
\begin{definition}[Strong Equivalence]
We say that two images $\mathcal{A}, \mathcal{B} \in G_{k\times m}(\mathbb{Z}_{n})(+,\cdot)$ are strongly equivalents if
$$
\mathcal{A}=\mathcal{S}+\mathcal{B},
$$
where $\mathcal{S}$ is a scalar image. We denote the strong equivalence between $\mathcal{A}$ and $\mathcal{B}$ as $\mathcal{A}\cong \mathcal{B}$.
\end{definition}
Note that if $\mathcal{A}=\mathcal{S}+\mathcal{B} \Rightarrow \exists\   \overline{\mathcal{S}}\ \vert\ \mathcal{B}=\overline{\mathcal{S}}+\mathcal{A}$ and $\overline{\mathcal{S}}=-(\mathcal{S})$, where $-(\mathcal{S})$ is the additive inverse of $\mathcal{S}$. This is calculated using the inverse of each pixels of $\mathcal{S}$ in $\mathbb{Z}_{n}$.
\begin{theorem}
If two images $\mathcal{A}$ and $\mathcal{B}$ are strongly equivalents then they are weakly equivalents.
\end{theorem}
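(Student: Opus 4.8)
The plan is to show that replacing $\mathcal{B}$ by $\mathcal{A}=\mathcal{S}+\mathcal{B}$ merely relabels the gray levels, so that the histogram is permuted but not changed as a multiset; since the entropy of Definition~(\ref{entropy definition}) is a symmetric function of the probability vector, it is invariant under such a relabeling.

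First I would unpack the hypothesis. Because $\mathcal{S}$ is a scalar image, there is a fixed $s\in\mathbb{Z}_{n}$ such that, pixel by pixel, $\mathcal{A}(i,j)=\mathcal{B}(i,j)+s$ with the sum taken in $\mathbb{Z}_{n}$. The key structural fact is that the translation $\tau_{s}\colon\mathbb{Z}_{n}\to\mathbb{Z}_{n}$, $\tau_{s}(x)=x+s$, is a bijection of $\mathbb{Z}_{n}$, its inverse being $\tau_{-s}$, where $-(s)$ is the additive inverse of $s$ guaranteed by the ring structure of $\mathbb{Z}_{n}$ (and hence of $G_{k\times m}(\mathbb{Z}_{n})$ by Theorem~\ref{theorem ring matrix}). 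Therefore a given pixel takes gray value $x$ in $\mathcal{A}$ if and only if it takes gray value $\tau_{-s}(x)$ in $\mathcal{B}$.

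Next I would pass to probabilities. Let $p_{x}$ and $q_{x}$ denote the occurrence probabilities of gray level $x$ in $\mathcal{A}$ and in $\mathcal{B}$ respectively. The previous paragraph gives $p_{x}=q_{\tau_{-s}(x)}$ for every level $x$. Substituting into the definition of entropy,
$$
E(\mathcal{A})=-\sum_{x}p_{x}\log_{2}p_{x}=-\sum_{x}q_{\tau_{-s}(x)}\log_{2}q_{\tau_{-s}(x)}.
$$
Since $\tau_{-s}$ is a bijection of the index set, the last sum is a reordering of $-\sum_{y}q_{y}\log_{2}q_{y}=E(\mathcal{B})$, and a finite sum is unaffected by reordering its terms; hence $E(\mathcal{A})=E(\mathcal{B})$, i.e. $\mathcal{A}\asymp\mathcal{B}$.

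The only point that needs a little care is reconciling the index range of the entropy sum, which runs over $0,\dots,2^{B}-1$, with the ring $\mathbb{Z}_{n}$: this is handled by taking $n=2^{B}$, or, more generally, by noting that levels that do not occur contribute $0$ by the convention $\log_{2}0=0$ and that $\tau_{-s}$ still permutes the set of levels that do occur, so the argument is unchanged. Everything else is routine: the substance of the proof is exactly the bijectivity of the translation, which is what turns ``add a constant image'' into ``permute the gray-level labels'', under which Shannon entropy is manifestly invariant.
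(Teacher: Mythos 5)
Your proof is correct and follows essentially the same route as the paper's: the paper likewise argues that adding the scalar image $\mathcal{S}$ only shifts the intensity of each pixel without changing the number of distinct intensities or their frequencies, so the entropy is unchanged. Your version merely makes explicit the underlying reason (the translation $\tau_{s}$ is a bijection of $\mathbb{Z}_{n}$, hence a permutation of the histogram, under which Shannon entropy is invariant), which is a welcome sharpening but not a different argument.
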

\begin{proof}
If $\mathcal{A}$ and $\mathcal{B}$ are strongly equivalents then $\mathcal{A}=\mathcal{S}+\mathcal{B}$ where $\mathcal{S}$ is a scalar image. Then
$E(\mathcal{A}) =E(\mathcal{S}+\mathcal{B})$ but $\mathcal{S}$ is a scalar image and for this reason the sum $\mathcal{S}+\mathcal{B}$ only change in $\mathcal{B}$ the intensity of each pixel but don't change the number of different intensities or the frequency of each intensity in the image. Then, $E(\mathcal{S}+\mathcal{B}) =E(\mathcal{B})$. Finally we obtain that $E(\mathcal{A})=E(\mathcal{B})$ and they are weakly equivalents.\qed
\end{proof}
Note that the shown images in Figure \ref{different images comparation} are weakly equivalents, but they are not strongly equivalents. This is an example that in general $\mathcal{A}\asymp \mathcal{B} \nRightarrow \mathcal{A}\cong \mathcal{B}$.
\begin{definition}[Natural Entropy Distance]
\label{definition new stop}
Let $\mathcal{A}$ and $\mathcal{B}$ two images, then the natural entropy distance is defined by
\begin{equation}
\label{definition equation new}
\hat{\nu}(\mathcal{A},\mathcal{B})=E(\mathcal{A} + (-\mathcal{B})).
\end{equation}
\end{definition} 
\begin{remark}
Remember that $-(\mathcal{B})$ is the additive inverse of $\mathcal{B}$ and this is calculated using the inverse of each pixel of $\mathcal{B}$ in $\mathbb{Z}_{n}$.
\end{remark}
If it are considered the images of Figure \ref{different images comparation}, the results show that $\hat{\nu}(Figure\ \ref{one_a},\ Figure\ \ref{two_b})=0.2514$. This is more reasonable result.\\
The next theorem is an important characterization of the strong equivalent among images.
\begin{theorem}
Two images $\mathcal{A}$ and $\mathcal{B}$ are strongly equivalent if and only if $\hat{\nu}(\mathcal{A},\mathcal{B})=0$. 
\end{theorem}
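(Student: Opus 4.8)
The plan is to reduce both implications to a single observation: the image entropy $E(\mathcal{C})$ vanishes if and only if $\mathcal{C}$ is a scalar image. Granting this, the theorem is immediate, because by Definition \ref{definition new stop} we have $\hat{\nu}(\mathcal{A},\mathcal{B})=E(\mathcal{A}+(-\mathcal{B}))$, and, working in the ring $G_{k\times m}(\mathbb{Z}_{n})$, the relation $\mathcal{A}=\mathcal{S}+\mathcal{B}$ is equivalent to $\mathcal{A}+(-\mathcal{B})=\mathcal{S}$; hence $\mathcal{A}\cong\mathcal{B}$ exactly when $\mathcal{A}+(-\mathcal{B})$ is a scalar image.

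First I would treat the forward implication. Assume $\mathcal{A}\cong\mathcal{B}$, so $\mathcal{A}+(-\mathcal{B})=\mathcal{S}$ with $\mathcal{S}$ scalar. A scalar image takes a single gray-level value on every pixel, so its gray-level probability vector has one entry equal to $1$ and all others equal to $0$. Substituting into the entropy formula (\ref{entropy definition}) and using the convention $\log_{2}0=0$ gives $E(\mathcal{S})=-1\cdot\log_{2}1=0$, that is, $\hat{\nu}(\mathcal{A},\mathcal{B})=0$.

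For the converse I would argue from the sign structure of the summands. Each probability $p_{x}$ lies in $[0,1]$, hence $-p_{x}\log_{2}p_{x}\ge 0$, so $E(\mathcal{A}+(-\mathcal{B}))$ is a sum of non-negative terms. If this sum equals $0$, every term must vanish, i.e.\ $p_{x}\log_{2}p_{x}=0$ for each gray level $x$, which forces $p_{x}\in\{0,1\}$. Since $\sum_{x}p_{x}=1$, exactly one $p_{x}$ equals $1$ and all the rest are $0$; therefore $\mathcal{A}+(-\mathcal{B})$ uses a single gray level and is a scalar image $\mathcal{S}$. Then $\mathcal{A}=\mathcal{S}+\mathcal{B}$, so $\mathcal{A}\cong\mathcal{B}$.

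I expect the only delicate point to be the step "$E=0$ implies a point-mass distribution": one must note that $t\mapsto -t\log_{2}t$ is strictly positive on $(0,1)$ and zero precisely at $t=0$ and $t=1$, so that a vanishing sum of such terms genuinely forces each $p_{x}$ into $\{0,1\}$. Everything else is routine bookkeeping in $G_{k\times m}(\mathbb{Z}_{n})$ together with a direct substitution into the entropy definition.
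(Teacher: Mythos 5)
Your proof is correct and follows essentially the same route as the paper: substitute $\mathcal{A}=\mathcal{S}+\mathcal{B}$ into $\hat{\nu}$, and identify strong equivalence with the difference image being scalar. The one place you go beyond the paper is in the converse, where the paper simply asserts that $\hat{\nu}(\mathcal{A},\mathcal{B})=0$ forces $\mathcal{A}+(-\mathcal{B})$ to be a scalar image, while you actually justify this via the non-negativity of the terms $-p_{x}\log_{2}p_{x}$ and the normalization $\sum_{x}p_{x}=1$; that fills a genuine gap in the paper's argument.
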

\begin{proof}
If $\mathcal{A}$ and $\mathcal{B}$ are strongly equivalents $\mathcal{A}=\mathcal{S}+\mathcal{B}$ where $\mathcal{S}$ is the scalar image. Then we have
\begin{align*}
\hat{\nu}(\mathcal{A},\mathcal{B}) &= E(\mathcal{A} + (-\mathcal{B})) & \mbox{replacing }\quad \mathcal{A}=\mathcal{S}+\mathcal{B}\\
&= E(\mathcal{S}+\mathcal{B} + (-\mathcal{B})) & \\
&= E(\mathcal{S})=0. 
\end{align*} 
$E(\mathcal{S})=0$ because $\mathcal{S}$ is a scalar image. It is demonstrated that $\mathcal{A} \cong \mathcal{B} \Rightarrow \hat{\nu}(\mathcal{A},\mathcal{B})=0$.\\
On the other hand if $\hat{\nu}(\mathcal{A},\mathcal{B})=0 \Rightarrow \mathcal{A}+(-\mathcal{B})=\mathcal{S}$, where $\mathcal{S}$ is a scalar image. Adding $\mathcal{B}$ in the last equation we obtain that $\mathcal{A}=\mathcal{S}+\mathcal{B}$, therefore $\hat{\nu}(\mathcal{A},\mathcal{B})=0 \Rightarrow \mathcal{A} \cong \mathcal{B}$.  \qed  
\end{proof}
Taking in consideration  the good properties that, in general, the natural entropy distance has (see Definition \ref{definition new stop}), one sees logical to take the condition (\ref{definition equation new}) as the new stopping criterion of the iterative mean shift algorithm. Explicitly, the new stopping criterion is 
\begin{equation}
\label{definition stop criterion new}
E(\mathcal{A}_{k}+(-\mathcal{A}_{k-1}))\leq \epsilon ,
\end{equation}
where $\epsilon$ and $k$ are respectively the threshold to stop the iterations and the number of iterations.
\section{Experiments and Results}
\label{Experiments and Results}
Image segmentation, that is, classification of the image gray-level values into homogeneous areas is recognized to be one of the most important step in any image analysis system. Homogeneity, in general, is defined as similarity among the pixel values, where a piecewise constant model is enforced over the image \cite{Comaniciu02}. \\
The principal goal of this section is to evaluate the new stop criterion in the iterative mean shift algorithm and to prove that, in general, with this new stopping criterion the algorithm have better stability. For this aim, we used three different images for the experiments. The first image (``	Bird") have low frequency, the second (``Baboon") have high frequency and in the image ``Montage" has mixture low and high frequencies. \\
All segmentation experiments were carried out by using a uniform kernel. In order to be effective the comparison between the old stopping criterion and the new stopping criterion, we use the same value of $hr$ and $hs$ in the iterative mean shift algorithm ($hr=12,\ hs=15 $). The value of $hs$ is related to the spatial resolution of the analysis, while the value $hr$ defines the range resolution. In the case of the new stopping criterion, we use the stopping threshold $\epsilon= 0.9$ and when the old stopping criterion was used, we selected $\epsilon= 0.0175$.\\
Figure \ref{test images} shows the segmentation of the three images. Observe that, in all cases, the iterative mean shift algorithm had better result when was used the new stopping criterion.\\ 
When one compares Figures \ref{bird_new} and \ref{bird_old}, in the part corresponding to the face or breast of the bird a more homogeneous area, with the new stopping criterion (see arrows in Figure \ref{bird_old}), it was obtained. Observe that, with the old stopping criterion the segmentation gives regions where different gray levels are originated. However, these regions really should have only one gray level. For example, Figure \ref{baboon_new} and \ref{baboon_old} show that the segmentation is more homogeneous when the new stopping criterion was used (see the arrows). In the case of the ``Montage" image one can see that, in Figure \ref{montage_old} exists many regions that contains different gray levels when these regions really should have one gray level (see for example the face of Lenna, the circles and the breast of the bird). These good results are obtained because the defined new stopping criterion  through the natural distance between images in expression (\ref{definition stop criterion new}) offers greater stability to the mean shift iterative algorithm.

\begin{figure}
\centering
\subfigure[Bird]{\includegraphics[scale=0.4]{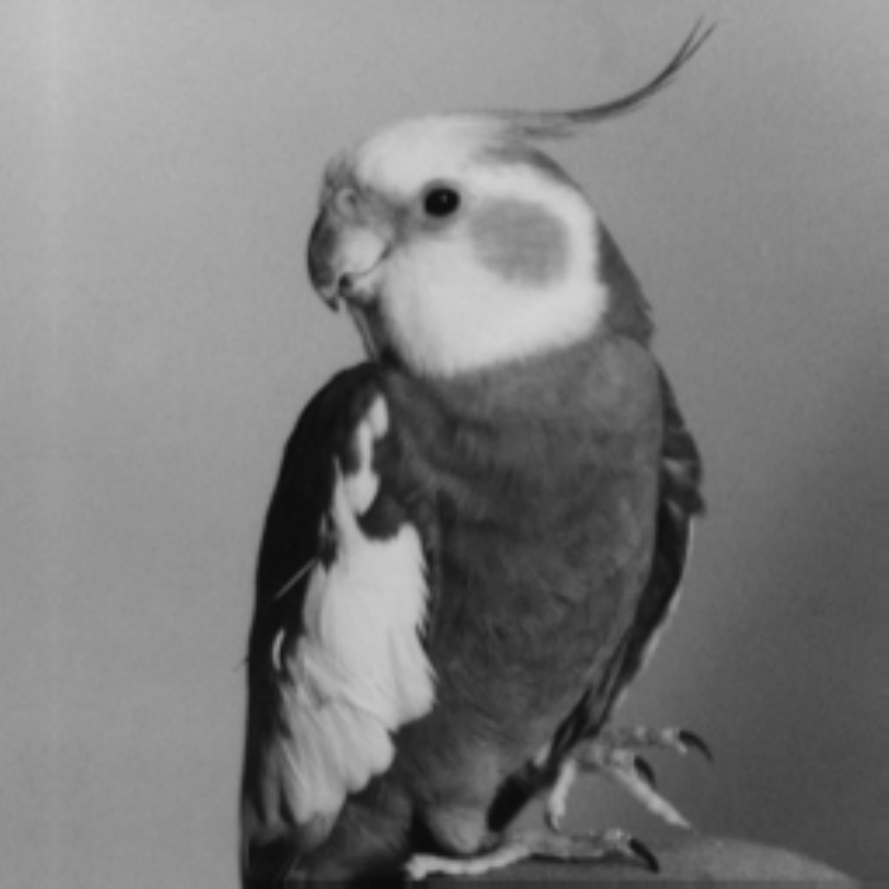}}
\subfigure[New Criterion]{\includegraphics[scale=0.4]{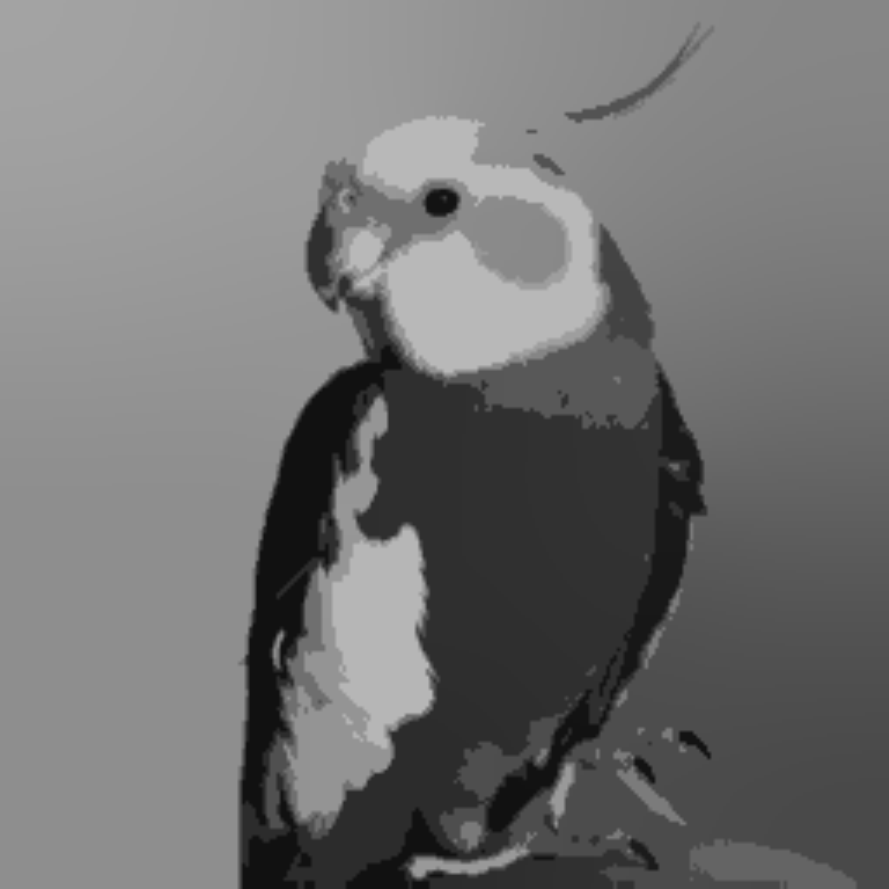}
\label{bird_new}}
\subfigure[Old Criterion]{\includegraphics[scale=0.4]{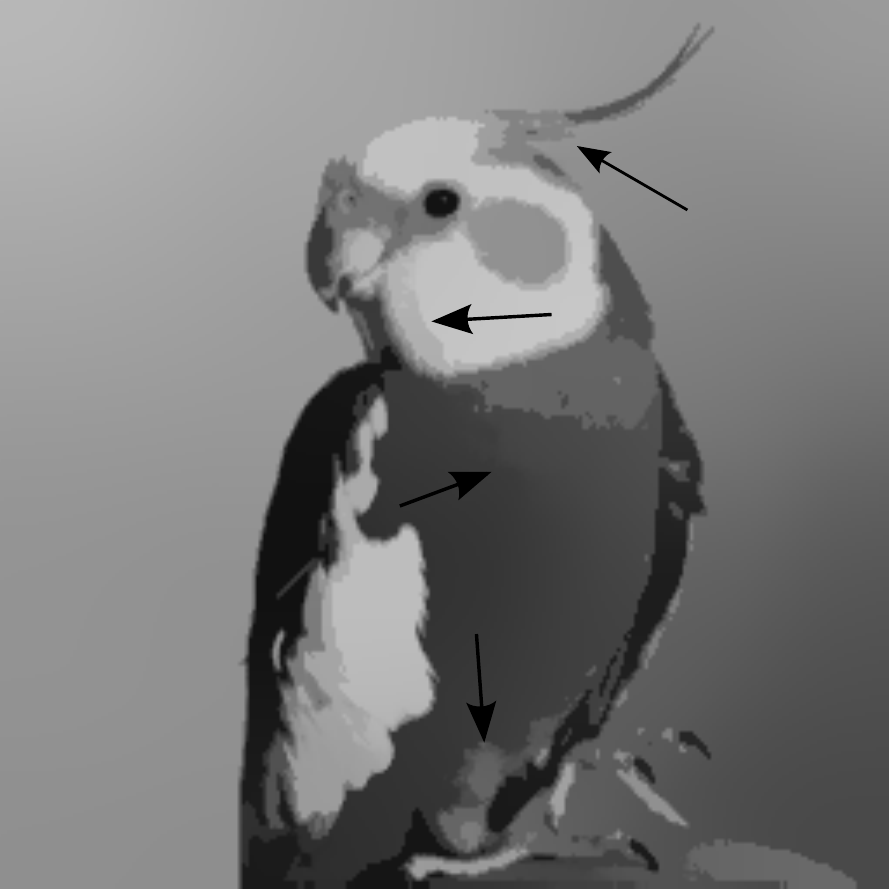}
\label{bird_old} }\\
\vspace{-0.1in}
\subfigure[Baboon]{\includegraphics[scale=0.4]{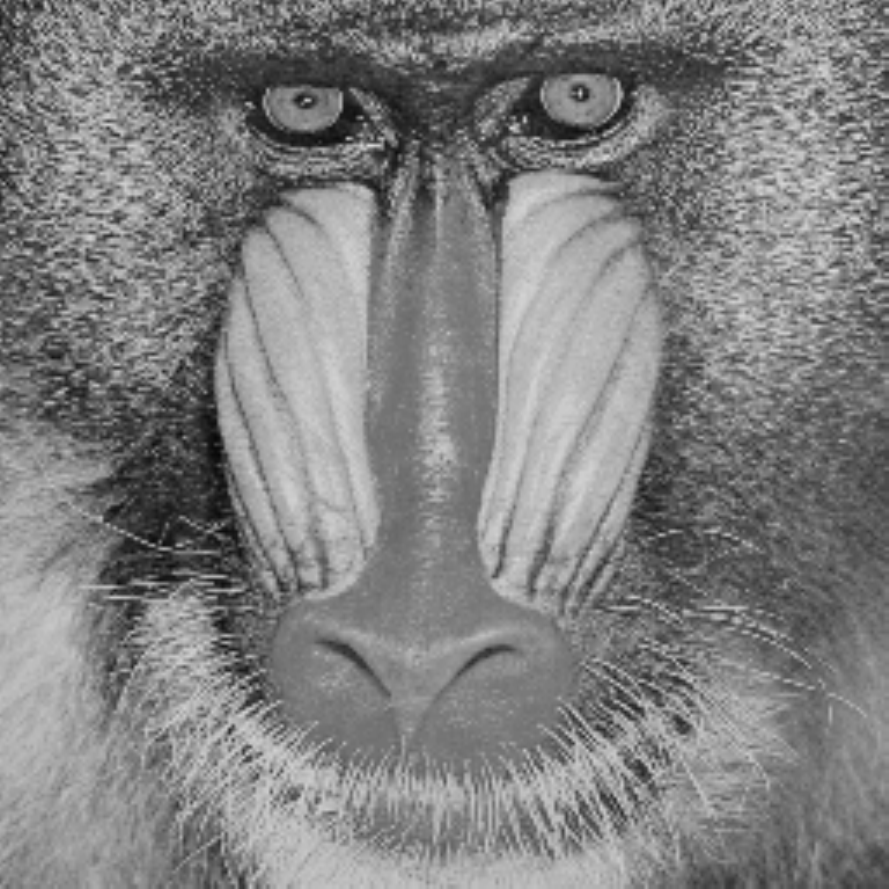}
\label{baboon}}
\subfigure[New Criterion]{\includegraphics[scale=0.4]{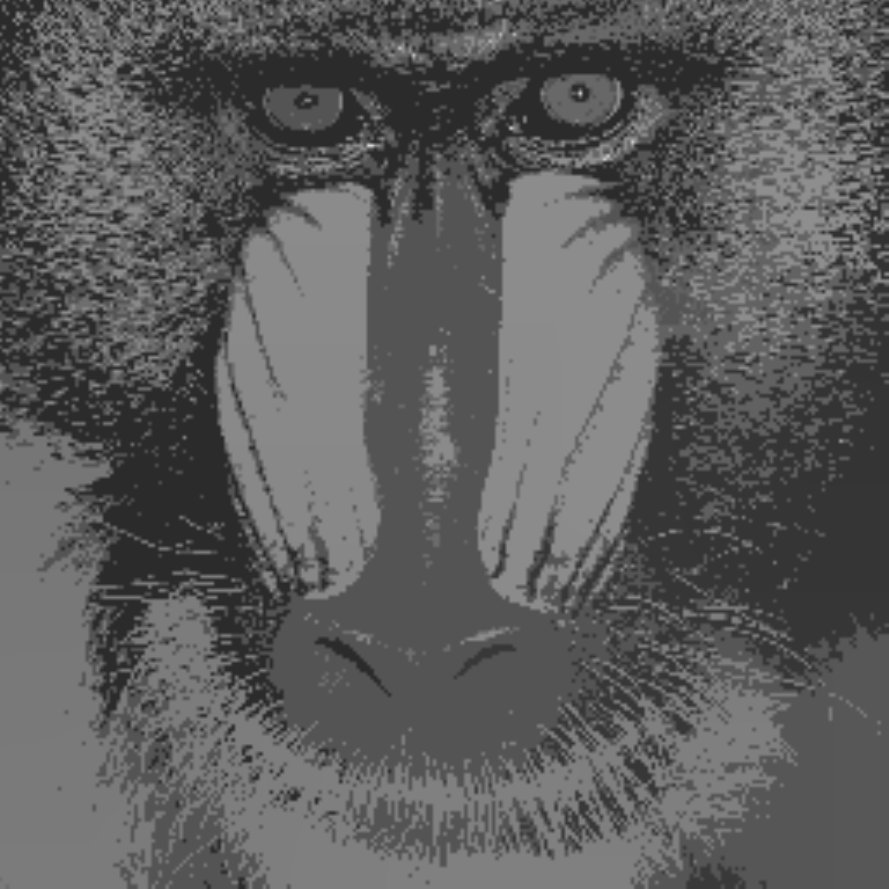}
\label{baboon_new}}
\subfigure[Old Criterion]{\includegraphics[scale=0.4]{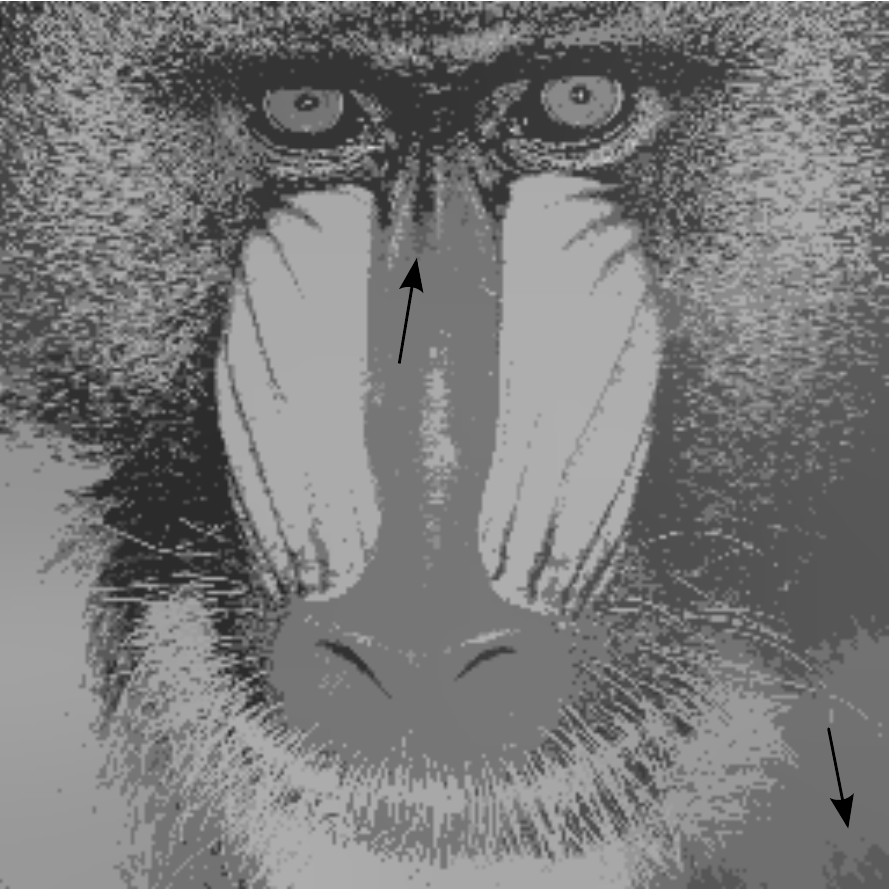}
\label{baboon_old}}\\
\vspace{-0.1in}
\subfigure[Montage]{\includegraphics[scale=0.4]{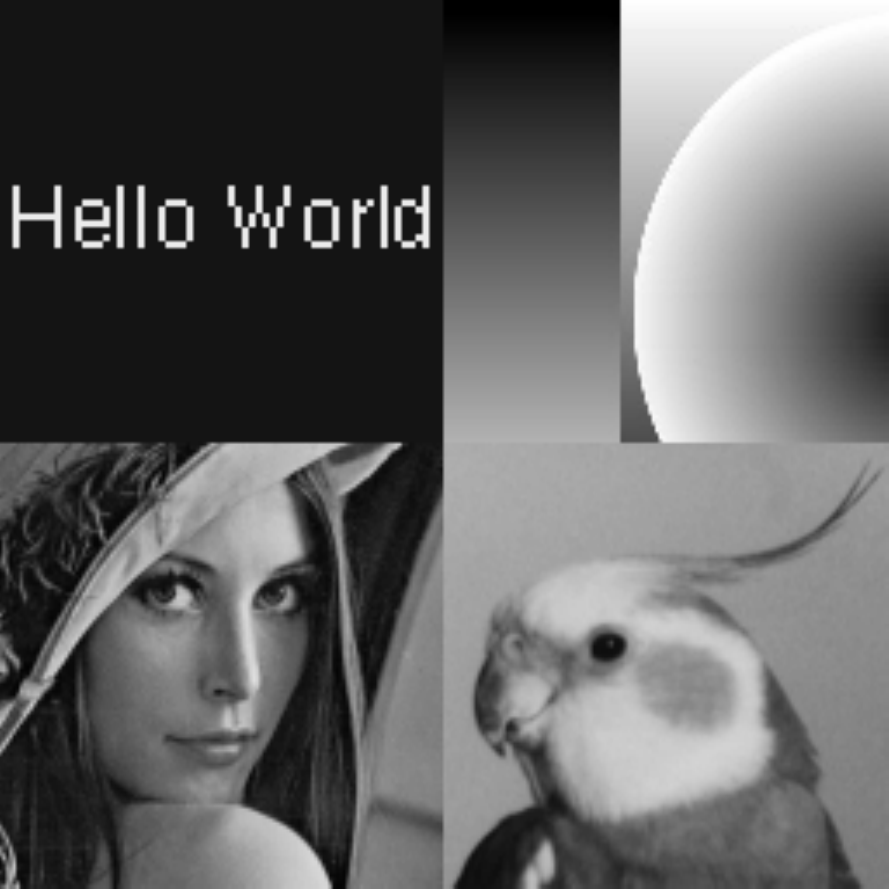}}
\subfigure[New Criterion]{\includegraphics[scale=0.4]{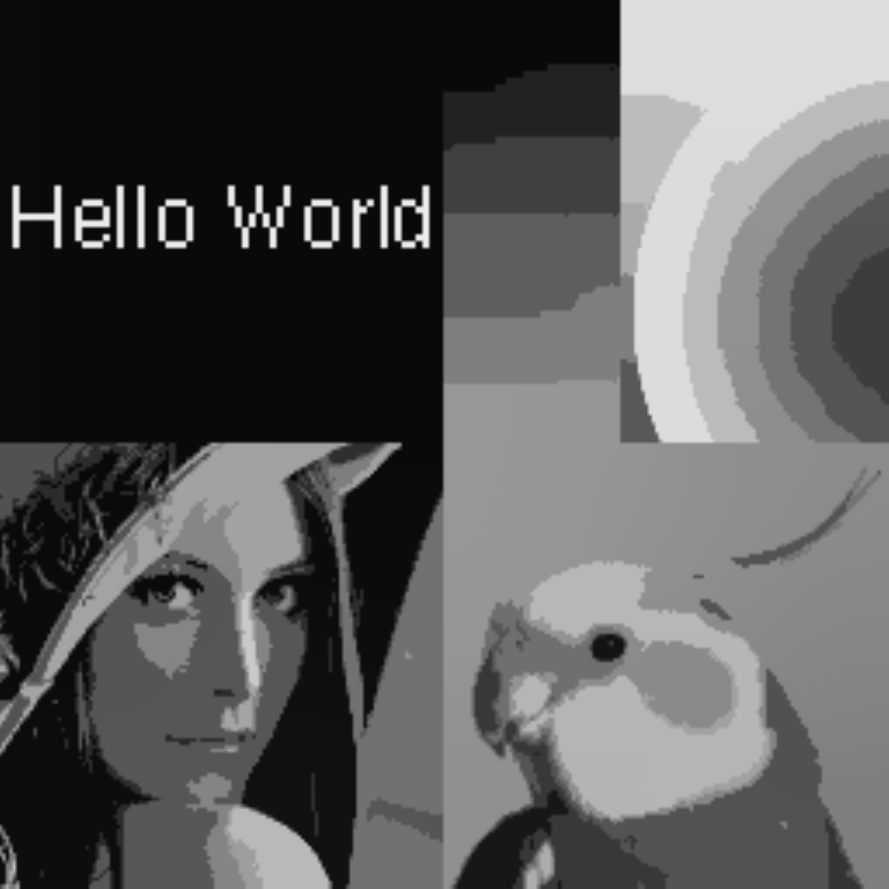}
\label{montage_new}}
\subfigure[Old Criterion]{\includegraphics[scale=0.4]{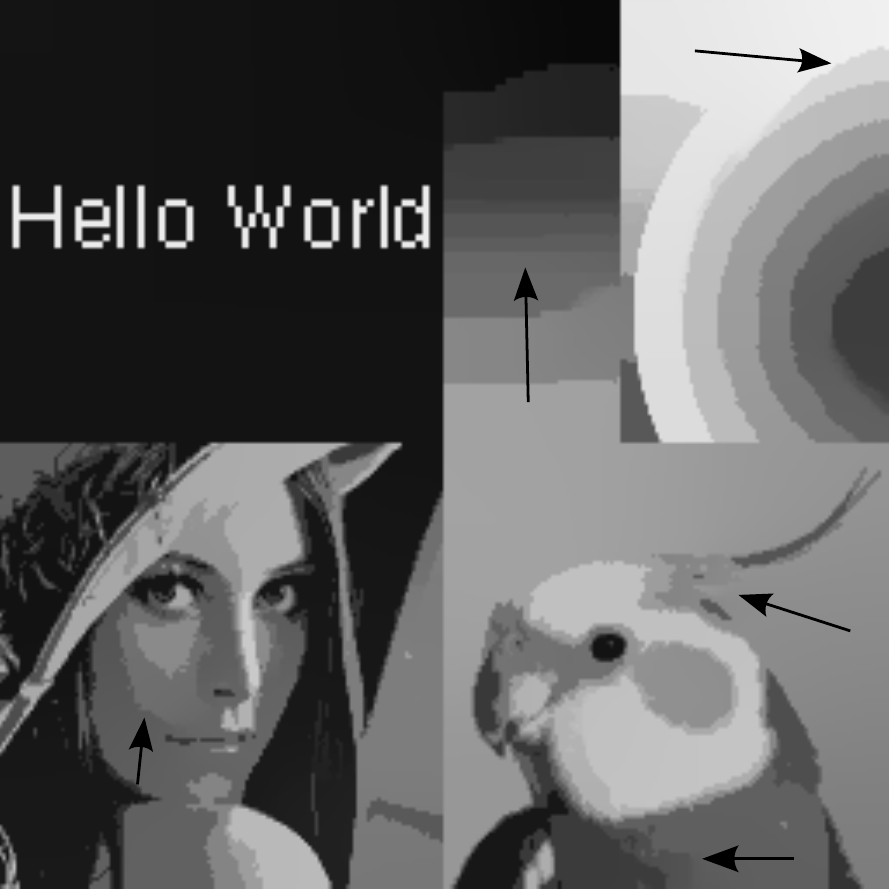}
\label{montage_old}}
\vspace{-.1in}
\caption{Segmentation of the experimental images. In the first column are show the original images; in the second, the segmentation using the new stopping criterion and in the third column are the segmented images using the old stopping criterion.}
\label{test images}
\vspace{-.2in}
\end{figure}

Figure \ref{profile test} shows the profile of the obtained segmented images by using the two stopping criterion\footnote{We show only the profile of one image for reasons of space, but the results in the other images were similar.}. The plates that appear in Figure \ref{profile new} and \ref{profile old} are indicative of equal intensity levels. In both graphics the abrupt falls of an intensity to other represent the different regions in the segmented image. Note that, in Figure \ref{profile new} exists, in the same region of the segmentation, least variation of the pixel intensities with regard to Figure \ref{profile old}. This illustrates that, in this case the segmentation was better when the new stopping criterion was used. 

\begin{figure}
\centering
\subfigure[New Criterion]{\includegraphics[scale=0.3]{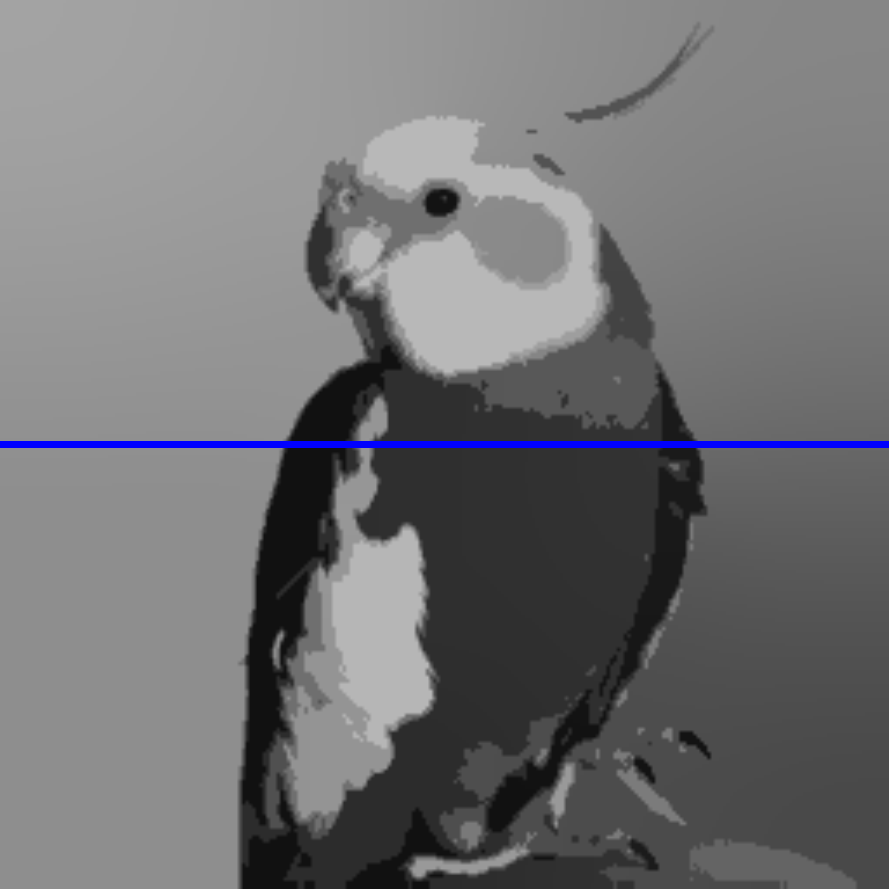} }
\subfigure[Profile]{\includegraphics[scale=0.45]{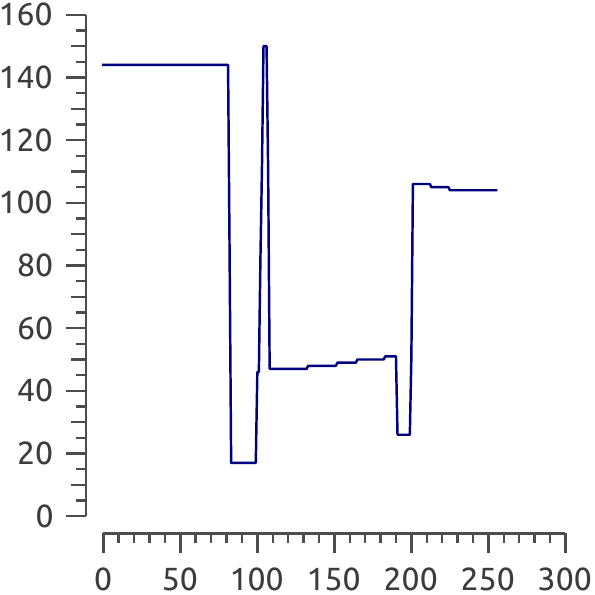}
\label{profile new} }
\subfigure[Old Criterion]{\includegraphics[scale=0.3]{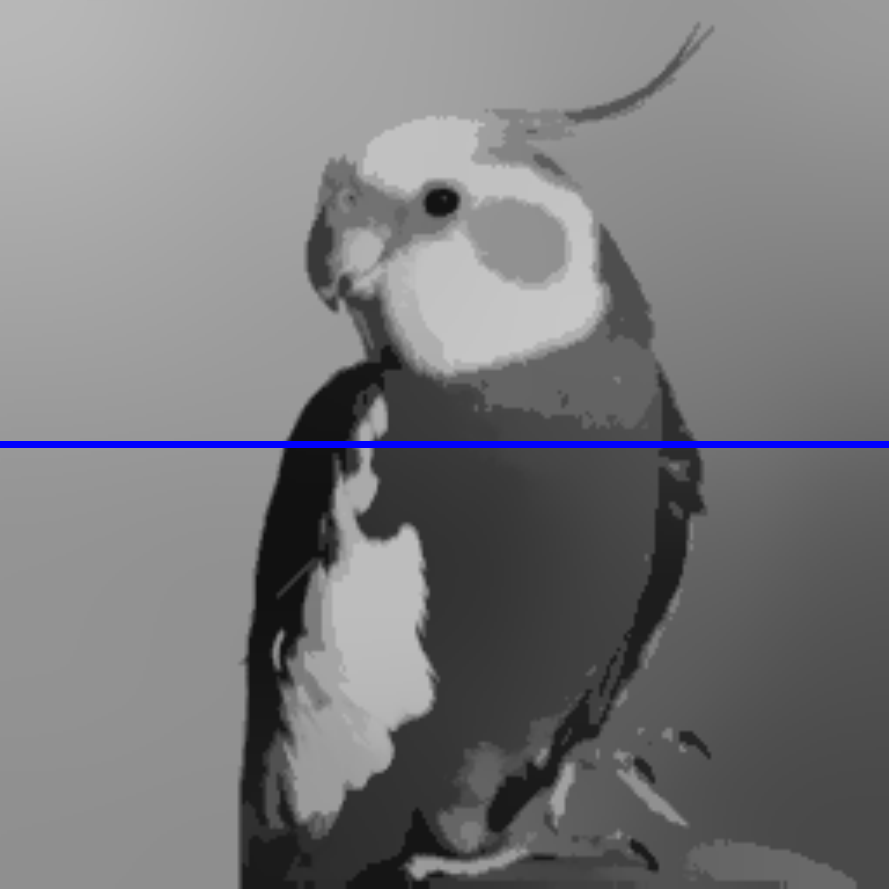} }
\subfigure[Profile]{\includegraphics[scale=0.45]{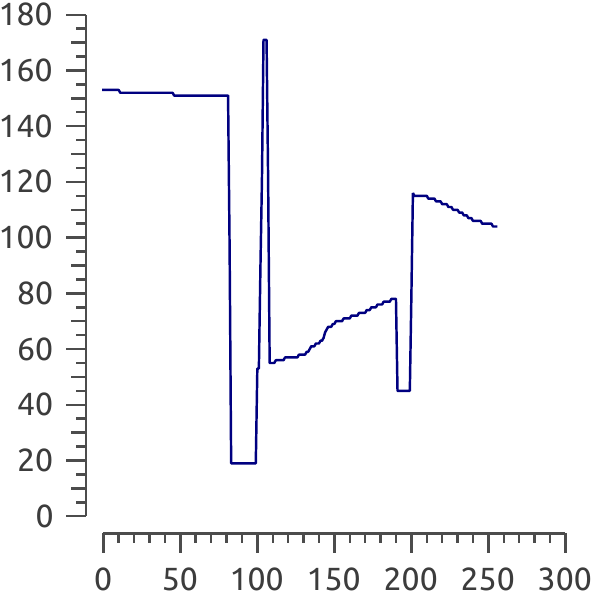} 
\label{profile old} }
\vspace{-.1in}
\caption{An intensity profile through of a segmented image. Profile is indicated by a line. (a) and (c) are the segmented images and (b) and (d) are the profile of (a) and (c) respectively.}
\label{profile test}
\vspace{-.2in}
\end{figure}
Figure \ref{iterations} shows the performance of the two stopping criterion in the experimental images. In the ``$x$" axis appears the iterations of the mean shift algorithm and in the ``$y$" axis is shown the obtained values by the stopping criterion in each iteration of the algorithm.\\ 
The graphics of iterations of the new stopping criterion (Figure \ref{iterations_new_bird}, \ref{iterations_new_baboon}, \ref{iterations_new_montage}) show a smooth behavior; that is, the stopping criterion has a stable performance through the iterative mean shift algorithm. The new stopping criterion not only has good theoretical properties, but also, in the practice, has very good behavior.\\
On the other hand, if we analyze the old stopping criterion in the experimental images (Figure \ref{iterations_old_bird}, \ref{iterations_old_baboon}, \ref{iterations_old_montage}), we can see that the performance in the mean shift algorithm is unstable. In general, we have this type of situation when the stopping criterion defined in (\ref{old criterion}) is used. This can originate bad segmented images.\\
\vspace{-.25in}
\begin{figure}
\centering
\subfigure[Bird]{\includegraphics[width=3.5cm, height=2.8cm]{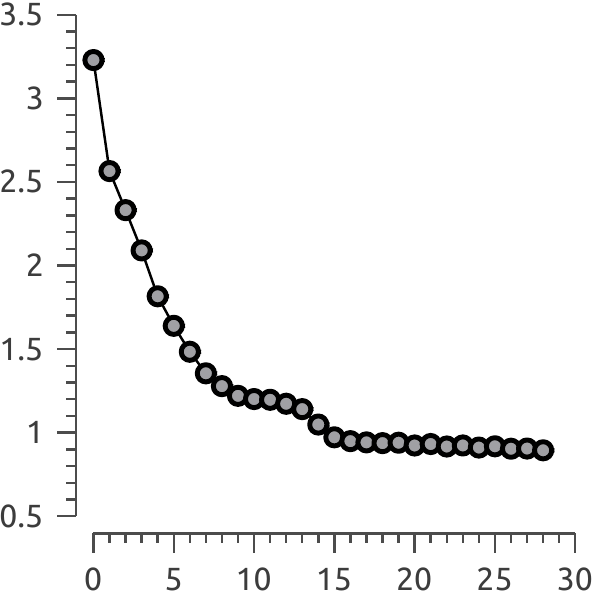}
\label{iterations_new_bird} }
\subfigure[Baboon]{\includegraphics[width=3.5cm, height=2.8cm]{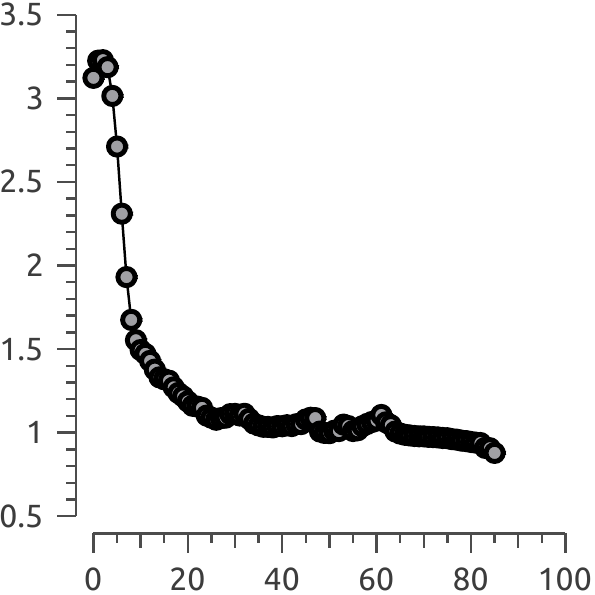} 
\label{iterations_new_baboon}}
\subfigure[Montage]{\includegraphics[width=3.5cm, height=2.8cm]{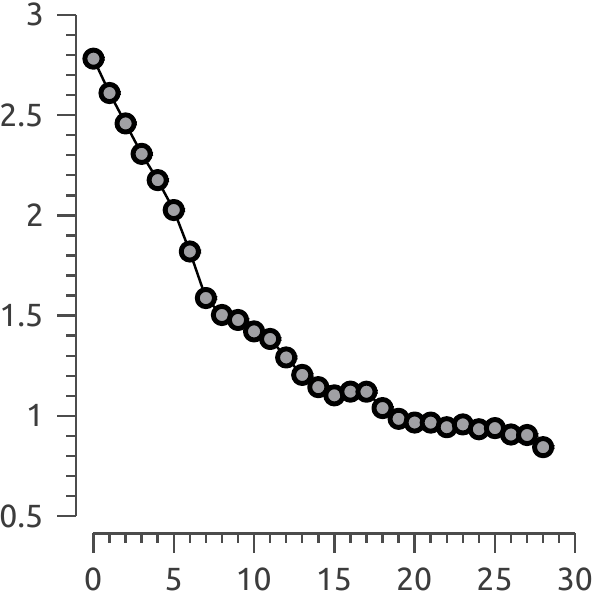} 
\label{iterations_new_montage}}\\
\vspace{-0.1in}
\subfigure[Bird]{\includegraphics[width=3.5cm, height=2.8cm]{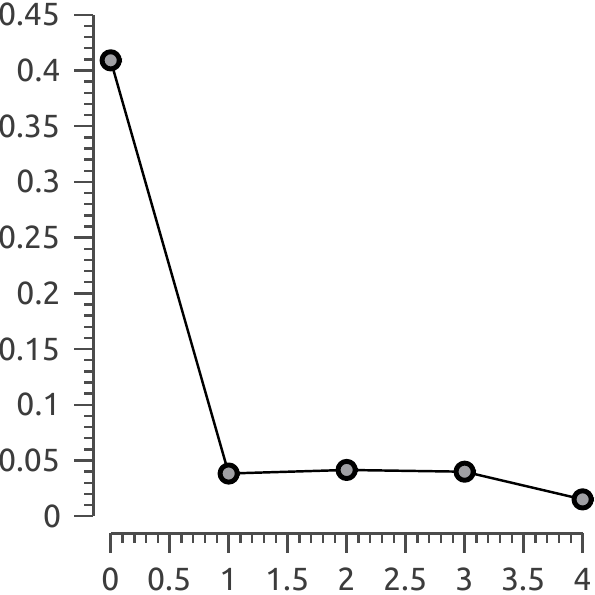} 
\label{iterations_old_bird}}
\subfigure[Baboon]{\includegraphics[width=3.5cm, height=2.8cm]{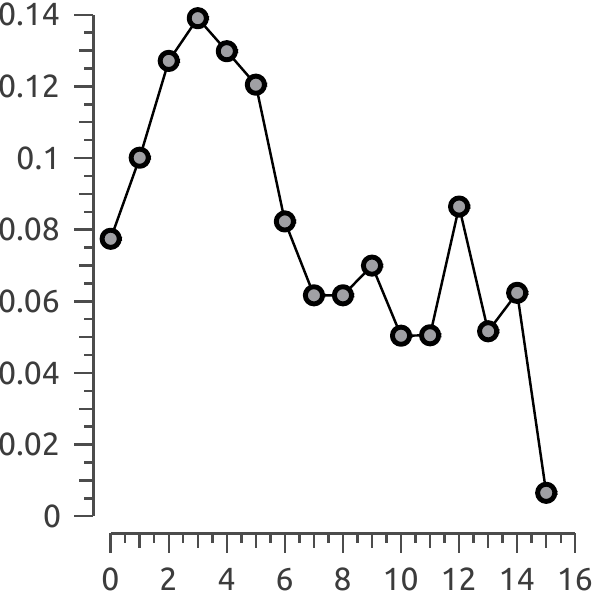} 
\label{iterations_old_baboon}}
\subfigure[Montage]{\includegraphics[width=3.5cm, height=2.8cm]{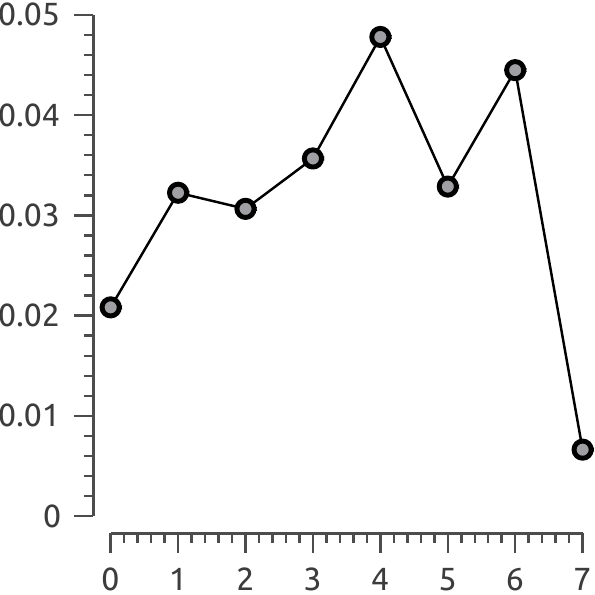} 
\label{iterations_old_montage}}
\vspace{-.15in}
\caption{Stopping criterion. In the first row appears the performance of the new stopping criterion and in the second, it is shown the old stopping criterion in correspondence with the experimental images.}
\label{iterations}
\vspace{-.2in}
\end{figure}
\vspace{-.2in}
\section*{Conclusions}
\vspace{-.1in}
In this work, a new stopping criterion, for the iterative mean shift algorithm, based on the ring theory was proposed. The new stopping criterion establishes a new measure for the comparison of two images based on the use of the entropy concept. We introduced a new way to operate with images based on the use of the ring structure. The rings in the images space were defined using the concept of $\mathbb{Z}_{n}$ rings. Through the obtained theoretical and practical results, it was possible to prove that the new stopping criterion had very good performance in the iterative mean shift algorithm, and in general, it was more stable that the old criterion \cite{Rodriguez11,Rodriguez11a,Rodriguez12,Rodriguez08}.\\

\end{document}